\definecolor{mygreen}{RGB}{0, 199, 0}
\definecolor{myorange}{RGB}{250, 100, 0}
\definecolor{myred}{RGB}{200, 0, 0}
\definecolor{myblue}{RGB}{30, 144, 255}
\definecolor{mylightskyblue}{RGB}{135, 206, 250}
\definecolor{myskyblue}{RGB}{0, 191, 255}
\definecolor{mypowderblue}{RGB}{176, 196, 222}
\newtheorem{theorem}{Theorem}[section]
\newtheorem{proposition}[theorem]{Proposition}
\def\algbackskip{\hskip-\ALG@thistlm}
\title{Regularized EM algorithm}
\name{Pierre Houdouin$^\dagger$, Esa Ollila$^\star$, Fr\'ed\'eric  Pascal$^\dagger$}
\address
{$^\dagger$ Université Paris-Saclay, CNRS, CentraleSupélec, Laboratoire des signaux et systèmes,\\ 91190, Gif-sur-Yvette, France. \\
$^\star$
Department of Signal Processing and Acoustics, 
Aalto University, Finland \\
}
\begin{document}
%
\maketitle
\begin{abstract}
Expectation-Maximization (EM) algorithm is a widely used iterative algorithm for computing (local) maximum likelihood estimate (MLE). It can be used in an extensive range of problems, including the clustering of data based on the Gaussian mixture model (GMM). 
Numerical instability and convergence problems may arise 
in situations where the sample size is not much larger than the data dimensionality. In such low sample support (LSS) settings, the covariance matrix update in the EM-GMM algorithm may become singular or poorly conditioned, causing the algorithm to crash. 
On the other hand, in many signal processing problems, a priori information can be available indicating certain structures for different cluster covariance matrices. In this paper, we present a regularized EM algorithm for GMM-s that can make efficient use of such prior knowledge as well as cope with LSS situations. 
The method aims to maximize a penalized GMM likelihood where  regularized estimation may be used to ensure positive definiteness of covariance matrix updates and shrink the estimators towards some structured target covariance matrices. We show that the theoretical guarantees of convergence hold, leading to better performing EM algorithm for structured covariance matrix models or with low sample settings.
\end{abstract}
\begin{keywords}
Clustering, EM algorithm, Gaussian mixture model, Structured covariance matrix, Regularization.
\end{keywords}
\section{Introduction}
\label{sec:intro}

Expectation-Maximization algorithm is a widely used iterative algorithm for finding the (local) maxima of the likelihood with incomplete data. It iteratively estimates the unknown parameters of the model by increasing the expected likelihood of the complete data conditioned over the observed data and current estimates of the parameters. The EM algorithm was proposed in \cite{Dempster2022Maximum} 
where the authors showed that at each iteration, the likelihood of the samples increases at least as much as the conditional expected likelihood. This likelihood maximization method is very suitable for dealing with mixture models. 
Work in \cite{Richard2022Mixture} considers EM algorithm 
for mixtures of exponential families. The problem of the Gaussian mixture model (GMM) has been treated in \cite{Guorong2001EM} and later extended by \cite{Ingrassia2012Studies} to $t$- distributions in order to cope with heavy-tailed data and outliers. More recently, generalization to mixtures of elliptical distributions has been developed for clustering applications in \cite{roizman2019flexible} as well as for classification applications (see \cite{houdouin2022robust}). Then, \cite{Teimour2021EM}  proposed an adaptation of the EM algorithm to deal with skewed normal distributions.

In signal processing problems,
dimension of the data, $m$, is often very high while the number of available samples, $n$, is low or of similar magnitude as $m$. In such low sample support (LSS) settings, convergence issues may arise when applying the EM algorithm for GMM. Namely, the update of the covariance matrix at each step is no longer necessarily invertible or poorly conditioned, leading the M-step to be ill-defined. To avoid this problem, \cite{Yi2015Regularized} proposes a regularized version of the EM algorithm for GMM based on resampling. Although the developed method ensures the invertibility of covariance matrix update, it does not exploit the possible underlying structure of the covariance matrix. More recently, \cite{Cai2017CHIME} proposed the CHIME algorithm to deal with high dimensional GMM-s. It relies on the estimation of discriminant vectors $\boldsymbol{\Sigma}^{-1}(\boldsymbol{\mu}_i-\boldsymbol{\mu}_j)$ then combined with Fisher Discriminant Analysis.

Regularized covariance matrix estimation \cite{Ying2014Regularized,pascal2014generalized,Ollila2019Optimal,ollila2021shrinking,yi2020shrinking} is a popular technique to cope with low sample support (LSS) settings. 
In this paper, we present a regularized EM algorithm where we maximize a penalized version of the GMM likelihood that leads to better conditioned regularized estimators  of the cluster covariance matrices as well as better clustering accuracy in LSS settings. We show that theoretical convergence guarantees hold and perform simulation experiments to validate the results. 

The paper is structured as follows. Section \ref{sec:2} recalls the GMM and provides the main derivations of this work for penalized GMM. Section \ref{sec:3} contains experiments on synthetic data. 
Conclusions, remarks, and perspectives are drawn in Section \ref{sec:conclusions}.

\section{Regularized EM algorithm}
\label{sec:2}

Let us assume that each observation $\mathbf{x}_i \in \mathbb{R}^m$ is drawn from a GMM. Each cluster has its own mean vector $\boldsymbol{\mu}_k \in \mathbb{R}^m$ and symmetric positive definite (SPD) $m \times m$ covariance matrix $\boldsymbol{\Sigma}_k$. We then have the following probability density function for $\mathbf{x}_i$ with priors (mixing proportions) $\pi_1,...,\pi_K \in [0,1]$ that verify $\sum_k \pi_k=1$, mean vectors  $\boldsymbol{\mu}_1,...,\boldsymbol{\mu}_K$ and covariance matrices $\mathbf{\Sigma}_1,...,\mathbf{\Sigma}_K$: 
$$
f(\mathbf{x}_i |\boldsymbol{\theta} ) =  (2\pi)^{-\frac{m}{2}} \sum_{k=1}^K \pi_k \left|\mathbf{\Sigma}_k \right|^{-\frac{1}{2}}  e^{ -\frac{1}{2} (\mathbf{x}_i-\boldsymbol{\mu}_k)^\top \mathbf{\Sigma}_k^{-1} (\mathbf{x}_i-\boldsymbol{\mu}_k)}
$$
where  $\boldsymbol{\theta} = (\pi_1,...,\pi_K,\boldsymbol{\mu}_1,...,\boldsymbol{\mu}_K,\mathbf{\Sigma}_1,...,\mathbf{\Sigma}_K)$ collects all unknown parameters. 

A priori knowledge of the structure of the cluster covariance matrices can be brought into the GMM estimation problem in forms of fixed SPD target $m \times m$ covariance matrices $\mathbf{T}_k$, $k=1,\ldots,K$. Namely, the covariance matrix of $k$th cluster is assumed to be close (but not identical) to a prespecified target matrix $\mathbf{T}_k$. 
To ensure that such structure is exploited, we penalize the likelihood with the Kullback-Leibler divergence between each $\mathbf{\Sigma}_k$ and $\mathbf{T}_k$,  defined as \cite{Ying2014Regularized}
\[
\Pi_{\textup{KL}}(\boldsymbol{\Sigma}_k,\mathbf{T}_k) =  \frac 1 2 \big(\mathrm{tr}(\boldsymbol{\Sigma}_k^{-1} \mathbf{T}_k)  - \log | \boldsymbol{\Sigma}_k^{-1} \mathbf{T}_k | - m \big).
\]

Let $ \mathbf{X} = (\mathbf{x}_1 \ \ \cdots \mathbf{x}_n)$ denote the data matrix of $n$ i.i.d. samples from the GMM model. Then the log-likelihood of $\boldsymbol{\theta}$ based on the data is  
$$
\ell(\boldsymbol{\theta} \vert \mathbf{X} ) = \sum_{i=1}^n \log f( \mathbf{x}_i | \boldsymbol{\theta}) 
$$ 
and the aim is then to find a (local) maximizer of the penalized GMM likelihood function:
$$ 
\ell_{\boldsymbol{\eta}}( \boldsymbol{\theta} \vert \mathbf{X} )=  \ell(\mathbf{X}|\boldsymbol{\theta}) - \sum_{k=1}^K \eta_k \Pi_{\textup{KL}}(\mathbf{\Sigma}_k,\mathbf{T}_k)
$$
where $\eta_1,...,\eta_K \geq 0 $ denote the penalization parameters for each cluster specified by the user.    
Let $\mathbf{z} = (z_1, \ldots,z_n)^\top$ represent the vector of latent unobserved labels. Thus  
$z_i  \in \{1,\ldots,K\}$ specifies to which population $i$th observation belongs to. However, this information is missing or is unobservable. The data $(\mathbf{X},\mathbf{z})$ is considered to as complete data, while the observed data $\mathbf{X}$ are referred to as incomplete data.  
Let $\log f(\mathbf{X}, \mathbf{z} \vert \boldsymbol{\theta})$ denote the log-likelihood function of the complete data.

\begin{proposition}
  Starting with an initial value $\boldsymbol{\theta}^{(0)}$, the iterative ($t=1,2, \ldots$) maximization of penalized conditional expected likelihood of the complete data   
  $$ Q( \boldsymbol{\theta} \vert \boldsymbol{\theta}^{(t)}) =  \mathbb{E}_{\mathbf{z}|\mathbf{X}, \boldsymbol{\theta}^{(t)}}\left[ \log f(\mathbf{z}, \mathbf{X}|\boldsymbol{\theta})\right] - \sum_{k=1}^K \eta_k \Pi_{\textup{KL}}(\mathbf{\Sigma}_k,\mathbf{T}_k)
  $$ 
  leads to sequence that ascents the penalized likelihood, i.e.,  $\ell_{\boldsymbol{\eta}}( \boldsymbol{\theta}^{(t+1)}  \vert \mathbf{X} ) \geq \ell_{\boldsymbol{\eta}}( \boldsymbol{\theta}^{(t)}  \vert \mathbf{X} )$. 
\end{proposition}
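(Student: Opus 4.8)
The plan is to reduce the penalized statement to the classical Dempster--Laird--Rubin ascent argument, using the key observation that the penalty $\sum_{k} \eta_k \Pi_{\textup{KL}}(\boldsymbol{\Sigma}_k,\mathbf{T}_k)$ is a deterministic function of $\boldsymbol{\theta}$ alone and does not involve the latent labels $\mathbf{z}$, so it passes unchanged through every conditional expectation over $\mathbf{z}\mid\mathbf{X},\boldsymbol{\theta}^{(t)}$.

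First I would write the complete-data log-likelihood as $\log f(\mathbf{z},\mathbf{X}\mid\boldsymbol{\theta}) = \log f(\mathbf{X}\mid\boldsymbol{\theta}) + \log f(\mathbf{z}\mid\mathbf{X},\boldsymbol{\theta})$, take $\mathbb{E}_{\mathbf{z}\mid\mathbf{X},\boldsymbol{\theta}^{(t)}}[\cdot]$ of both sides, and subtract the penalty. Since $\ell(\mathbf{X}\mid\boldsymbol{\theta})$ and the penalty do not depend on $\mathbf{z}$, this gives
\[
Q(\boldsymbol{\theta}\mid\boldsymbol{\theta}^{(t)}) = \ell_{\boldsymbol{\eta}}(\boldsymbol{\theta}\mid\mathbf{X}) + H(\boldsymbol{\theta}\mid\boldsymbol{\theta}^{(t)}),
\qquad H(\boldsymbol{\theta}\mid\boldsymbol{\theta}^{(t)}) := \mathbb{E}_{\mathbf{z}\mid\mathbf{X},\boldsymbol{\theta}^{(t)}}\!\left[\log f(\mathbf{z}\mid\mathbf{X},\boldsymbol{\theta})\right],
\]
i.e. $\ell_{\boldsymbol{\eta}}(\boldsymbol{\theta}\mid\mathbf{X}) = Q(\boldsymbol{\theta}\mid\boldsymbol{\theta}^{(t)}) - H(\boldsymbol{\theta}\mid\boldsymbol{\theta}^{(t)})$. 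Evaluating this identity at $\boldsymbol{\theta}=\boldsymbol{\theta}^{(t+1)}$ and at $\boldsymbol{\theta}=\boldsymbol{\theta}^{(t)}$ and subtracting yields
\[
\ell_{\boldsymbol{\eta}}(\boldsymbol{\theta}^{(t+1)}\mid\mathbf{X}) - \ell_{\boldsymbol{\eta}}(\boldsymbol{\theta}^{(t)}\mid\mathbf{X})
= \big[Q(\boldsymbol{\theta}^{(t+1)}\mid\boldsymbol{\theta}^{(t)}) - Q(\boldsymbol{\theta}^{(t)}\mid\boldsymbol{\theta}^{(t)})\big]
+ \big[H(\boldsymbol{\theta}^{(t)}\mid\boldsymbol{\theta}^{(t)}) - H(\boldsymbol{\theta}^{(t+1)}\mid\boldsymbol{\theta}^{(t)})\big].
\]
The first bracket is nonnegative because $\boldsymbol{\theta}^{(t+1)}$ is chosen to maximize (or at least not decrease) $Q(\cdot\mid\boldsymbol{\theta}^{(t)})$ in the M-step. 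For the second bracket I would invoke Gibbs' inequality: for fixed $\boldsymbol{\theta}^{(t)}$, the difference $H(\boldsymbol{\theta}^{(t)}\mid\boldsymbol{\theta}^{(t)}) - H(\boldsymbol{\theta}\mid\boldsymbol{\theta}^{(t)})$ equals the Kullback--Leibler divergence from $f(\cdot\mid\mathbf{X},\boldsymbol{\theta}^{(t)})$ to $f(\cdot\mid\mathbf{X},\boldsymbol{\theta})$ and is therefore $\ge 0$ for every $\boldsymbol{\theta}$, in particular for $\boldsymbol{\theta}^{(t+1)}$. Adding the two nonnegative terms gives the claimed ascent property.

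I do not expect a genuine obstacle here, since the argument is the original EM computation; the only point needing care is the bookkeeping that the KL penalty, being independent of $\mathbf{z}$, appears identically in $Q$ and in $\ell_{\boldsymbol{\eta}}$ and hence cancels out of the $H$-term, leaving the Gibbs' inequality step exactly as in the unpenalized case. I would also remark that the proof uses only that the M-step does not decrease $Q(\cdot\mid\boldsymbol{\theta}^{(t)})$, so it equally covers generalized EM variants, and that the M-step decomposes over $k$ so that in practice $\boldsymbol{\theta}^{(t+1)}$ can be obtained by separately maximizing each penalized per-cluster term.
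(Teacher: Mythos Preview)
Your proof is correct and follows essentially the same route as the paper: both use the decomposition $\ell_{\boldsymbol{\eta}}(\boldsymbol{\theta}\mid\mathbf{X}) = Q(\boldsymbol{\theta}\mid\boldsymbol{\theta}^{(t)}) - \mathbb{E}_{\mathbf{z}\mid\mathbf{X},\boldsymbol{\theta}^{(t)}}[\log f(\mathbf{z}\mid\mathbf{X},\boldsymbol{\theta})]$, invoke Gibbs' inequality for the $H$-term, and conclude from the M-step choice that the $Q$-difference is nonnegative. Your explicit naming of $H$ and the remark that the penalty is $\mathbf{z}$-free (hence passes through the expectation unchanged) make the bookkeeping slightly more transparent, but the argument is the same.
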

\begin{proof} First note that 
\begin{align*}
&\ell_{\boldsymbol{\eta}}( \boldsymbol{\theta}  \vert \mathbf{X} )  \\ 
&=\log f(\mathbf{X}, \mathbf{z} \vert \boldsymbol{\theta}) - \log f(\mathbf{z} \vert \mathbf{X}, \boldsymbol{\theta}) - \sum_{k=1}^K \eta_k \Pi_{\textup{KL}}(\mathbf{\Sigma}_k,\mathbf{T}_k) \\
&=  \mathbb{E}_{\mathbf{z}|\mathbf{X}, \boldsymbol{\theta}^{(t)}} \left[\log f (\mathbf{X}, \mathbf{z}|\boldsymbol{\theta}) \right]  - \sum_{k=1}^K \eta_k \Pi_{\textup{KL}}(\mathbf{\Sigma}_k,\mathbf{T}_k) \\
&\qquad \quad - \mathbb{E}_{\mathbf{z}|\mathbf{X}, \boldsymbol{\theta}^{(t)}} \left[ \log f(\mathbf{z}|\mathbf{X}, \boldsymbol{\theta})\right] \\ 
&= Q( \boldsymbol{\theta} \vert \boldsymbol{\theta}^{(t)}) - \mathbb{E}_{\mathbf{z}|\mathbf{X}, \boldsymbol{\theta}^{(t)}} \left[ \log f(\mathbf{z}|\mathbf{X}, \boldsymbol{\theta})\right]  
\end{align*}
Gibb's inequality ensures that 
$$ 
\mathbb{E}_{\mathbf{z}|\mathbf{X}, \boldsymbol{\theta}^{(t)}} [ \log f (\mathbf{z}|\mathbf{X}, \boldsymbol{\theta}) ] \leq \mathbb{E}_{\mathbf{z}|\mathbf{X}, \boldsymbol{\theta}^{(t)}}[ \log f(\mathbf{z}|\mathbf{X}, \boldsymbol{\theta}^{(t)})] .
$$ 
The above then implies that  
$$
\ell_{\boldsymbol{\eta}}( \boldsymbol{\theta}  \vert \mathbf{X} )  - \ell_{\boldsymbol{\eta}}( \boldsymbol{\theta}^{(t)}  \vert \mathbf{X} ) \geq  Q( \boldsymbol{\theta} \vert \boldsymbol{\theta}^{(t)}) - 
Q( \boldsymbol{\theta}^{(t)} \vert \boldsymbol{\theta}^{(t)})
$$
so maximizing $Q( \boldsymbol{\theta} \vert \boldsymbol{\theta}^{(t)})$ 
causes 
$\ell_{\boldsymbol{\eta}}( \boldsymbol{\theta}  \vert \mathbf{X} ) $  
to increase at least as much. This means that we only need 
\[
    Q ( \boldsymbol{\theta}^{(t+1)} \mid \boldsymbol{\theta}^{(t)}) - Q( \boldsymbol{\theta}^{(t)} \mid \boldsymbol{\theta}^{(t)}) \ge 0
\]
for the ascent property to hold. This holds since $\boldsymbol{\theta}^{(t+1)}$ is chosen to maximize $ Q( \boldsymbol{\theta} \vert \boldsymbol{\theta}^{(t)})$.
\end{proof}

\begin{proposition}
The E-step of the regularized EM algorithm remains unchanged, we have $\forall i \in  [\![1,n]\!], \forall k \in  [\![1,K]\!]$ :
\begin{equation}
p_{ik}^{(t)} = \frac{\hat{\pi}_k^{(t)} |\mathbf{\hat{\Sigma}}^{(t)}_k|^{-\frac{1}{2}} e^{-\frac{1}{2} (\mathbf{x}_i-\boldsymbol{\hat{\mu}}_k^{(t)})^\top \mathbf{\hat{\Sigma}}^{t-1}_k (\mathbf{x}_i-\boldsymbol{\hat{\mu}}_k^{(t)})}}{\sum_{j=1}^K \hat{\pi}_j^{(t)} |\mathbf{\hat{\Sigma}}^{t}_j|^{-\frac{1}{2}} e^{-\frac{1}{2} (\mathbf{x}_i-\boldsymbol{\hat{\mu}}_j^{(t)})^\top \mathbf{\hat{\Sigma}}^{t-1}_j (\mathbf{x}_i-\boldsymbol{\hat{\mu}}_j^{(t)})}} 
\end{equation}
\end{proposition}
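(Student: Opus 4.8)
The plan is to exploit the fact that the penalty $\sum_{k=1}^K \eta_k \Pi_{\textup{KL}}(\boldsymbol{\Sigma}_k,\mathbf{T}_k)$ appearing in $Q(\boldsymbol{\theta}\vert\boldsymbol{\theta}^{(t)})$ does not involve the latent labels $\mathbf{z}$, so it is inert under the conditional expectation $\mathbb{E}_{\mathbf{z}\vert\mathbf{X},\boldsymbol{\theta}^{(t)}}[\cdot]$ and the E-step collapses to the one of the ordinary (unpenalized) EM-GMM algorithm.

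First I would write the complete-data log-likelihood in indicator form, $\log f(\mathbf{X},\mathbf{z}\vert\boldsymbol{\theta}) = \sum_{i=1}^n\sum_{k=1}^K \mathbb{1}(z_i=k)\big(\log\pi_k + \log\phi(\mathbf{x}_i;\boldsymbol{\mu}_k,\boldsymbol{\Sigma}_k)\big)$, where $\phi(\cdot\,;\boldsymbol{\mu}_k,\boldsymbol{\Sigma}_k)$ denotes the $\mathbb{R}^m$ Gaussian density with mean $\boldsymbol{\mu}_k$ and covariance $\boldsymbol{\Sigma}_k$. Applying $\mathbb{E}_{\mathbf{z}\vert\mathbf{X},\boldsymbol{\theta}^{(t)}}[\cdot]$ and using linearity, the only random quantities are the indicators $\mathbb{1}(z_i=k)$; since the pairs $(\mathbf{x}_i,z_i)$ are i.i.d., $z_i$ depends on the data only through $\mathbf{x}_i$, hence $\mathbb{E}_{\mathbf{z}\vert\mathbf{X},\boldsymbol{\theta}^{(t)}}[\mathbb{1}(z_i=k)] = P(z_i=k\vert\mathbf{x}_i,\boldsymbol{\theta}^{(t)}) =: p_{ik}^{(t)}$, which identifies the coefficients $p_{ik}^{(t)}$ as the responsibilities.

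Next I would compute $p_{ik}^{(t)}$ by Bayes' rule, $p_{ik}^{(t)} = \dfrac{\hat\pi_k^{(t)}\, f(\mathbf{x}_i\vert z_i=k,\boldsymbol{\theta}^{(t)})}{\sum_{j=1}^K \hat\pi_j^{(t)}\, f(\mathbf{x}_i\vert z_i=j,\boldsymbol{\theta}^{(t)})}$, plug in the Gaussian conditional density $f(\mathbf{x}_i\vert z_i=k,\boldsymbol{\theta}^{(t)}) = (2\pi)^{-m/2}|\hat{\boldsymbol{\Sigma}}_k^{(t)}|^{-1/2}\exp\!\big(-\tfrac12(\mathbf{x}_i-\hat{\boldsymbol{\mu}}_k^{(t)})^\top(\hat{\boldsymbol{\Sigma}}_k^{(t)})^{-1}(\mathbf{x}_i-\hat{\boldsymbol{\mu}}_k^{(t)})\big)$, and cancel the common factor $(2\pi)^{-m/2}$ between numerator and denominator; this yields exactly equation (1). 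I would close by remarking that, because the penalty is a deterministic function of $(\boldsymbol{\Sigma}_1,\ldots,\boldsymbol{\Sigma}_K)$ alone, it passes through the E-step untouched and reappears only in the M-step objective, so the regularization modifies the M-step and leaves the E-step identical to the classical case. There is no genuine obstacle in this argument — it is a direct computation — and the only point deserving emphasis is precisely this decoupling of the penalty from the expectation over $\mathbf{z}$.
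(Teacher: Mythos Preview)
Your argument is correct and follows essentially the same route as the paper: the penalty is independent of $\mathbf{z}$ and hence inert in the E-step, so the responsibilities $p_{ik}^{(t)}$ reduce to the standard posterior $\Pr(z_i=k\mid\mathbf{x}_i,\boldsymbol{\theta}^{(t)})$ computed via Bayes' rule with Gaussian class-conditionals. Your write-up is somewhat more explicit (spelling out the indicator decomposition and the cancellation of $(2\pi)^{-m/2}$), but the logic is identical.
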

\begin{proof} 
This is obvious since the penalization term  has no impact on the expected conditional likelihood. Thus evaluating  $Q( \boldsymbol{\theta} \vert \boldsymbol{\theta}^{(t)})$ one can notice that, similar to non-penalized case, only 
$$
p_{ik}^{(t)} = \Pr(z_j=k| \mathbf{x}_i,\boldsymbol{\theta}^{(t)})
$$
depends on $\boldsymbol{\theta}^{(t)}$. One can then apply the Bayes theorem and use the fact that conditional class distributions are Gaussian, i.e.,  $\mathbf{x}_i | z_j = k \sim \mathcal N_m(\boldsymbol{\mu}_k,\boldsymbol{\Sigma}_k)$ and $\pi_k = \Pr(z_j = k)$.   
\end{proof}

\begin{proposition}
The M-step consists in :
\begin{align*}
\pi_k^{(t+1)} &= \frac{1}{n} \sum_{i=1}^n p_{ik}^{(t)}\,, &
\boldsymbol{\hat{\mu}}_k^{(t+1)} 
= \sum_{i=1}^n w_{ik}^{(t)} \mathbf{x}_i
\end{align*}
\begin{equation*}
\hat{\boldsymbol{\Sigma}}_k^{(t+1)} = \beta_k^{(t+1)} \sum_{i=1}^n w_{ik}^{(t)}  (\mathbf{x}_i-\boldsymbol{\hat{\mu}}_k^{(t)})( \mathbf{x}_i-\hat{\boldsymbol{\mu}}_k^{(t)})^\top + (1-\beta_k^{(t+1)})\mathbf{T}_k, 
\end{equation*}
where
$\displaystyle \beta_k^{(t+1)} = \frac{n\pi_k^{(t+1)}}{\eta_k + n\pi_k^{(t+1)}}$ and $\displaystyle w_{ik}^{(t)} = \frac{p_{ik}^{(t)}}{\sum_{i=1}^n p_{ik}^{(t)}}
$
\end{proposition}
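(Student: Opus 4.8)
The plan is to carry out the M-step directly, i.e. to maximize $Q(\boldsymbol{\theta}\vert\boldsymbol{\theta}^{(t)})$ over $\boldsymbol{\theta}=(\pi_1,\dots,\pi_K,\boldsymbol{\mu}_1,\dots,\boldsymbol{\mu}_K,\boldsymbol{\Sigma}_1,\dots,\boldsymbol{\Sigma}_K)$. First I would expand the complete-data log-likelihood as $\log f(\mathbf{z},\mathbf{X}\vert\boldsymbol{\theta})=\sum_{i=1}^n\sum_{k=1}^K \mathbf{1}\{z_i=k\}\big(\log\pi_k-\tfrac m2\log(2\pi)-\tfrac12\log|\boldsymbol{\Sigma}_k|-\tfrac12(\mathbf{x}_i-\boldsymbol{\mu}_k)^\top\boldsymbol{\Sigma}_k^{-1}(\mathbf{x}_i-\boldsymbol{\mu}_k)\big)$ and take the conditional expectation, which replaces $\mathbf{1}\{z_i=k\}$ by the posterior weight $p_{ik}^{(t)}=\Pr(z_i=k\mid\mathbf{x}_i,\boldsymbol{\theta}^{(t)})$ of Proposition~2. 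Writing $n_k=\sum_i p_{ik}^{(t)}$ and $\mathbf{S}_k=\sum_i p_{ik}^{(t)}(\mathbf{x}_i-\boldsymbol{\mu}_k)(\mathbf{x}_i-\boldsymbol{\mu}_k)^\top$, this presents $Q$ as the sum of a term depending only on $\{\pi_k\}$, plus, for each $k$, a term $-\tfrac{n_k}{2}\log|\boldsymbol{\Sigma}_k|-\tfrac12\mathrm{tr}(\boldsymbol{\Sigma}_k^{-1}\mathbf{S}_k)-\eta_k\Pi_{\textup{KL}}(\boldsymbol{\Sigma}_k,\mathbf{T}_k)$ depending only on $(\boldsymbol{\mu}_k,\boldsymbol{\Sigma}_k)$. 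Because the objective separates across blocks, each can be optimized independently.

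For the mixing proportions the KL penalty is irrelevant, so I would maximize $\sum_{i,k}p_{ik}^{(t)}\log\pi_k$ subject to $\sum_k\pi_k=1$ with a Lagrange multiplier; using $\sum_k\sum_i p_{ik}^{(t)}=n$ gives $\pi_k^{(t+1)}=\tfrac1n\sum_i p_{ik}^{(t)}$. For the means the penalty is again absent, so setting the gradient in $\boldsymbol{\mu}_k$ of $-\tfrac12\sum_i p_{ik}^{(t)}(\mathbf{x}_i-\boldsymbol{\mu}_k)^\top\boldsymbol{\Sigma}_k^{-1}(\mathbf{x}_i-\boldsymbol{\mu}_k)$ to zero yields the weighted mean $\boldsymbol{\hat\mu}_k^{(t+1)}=\sum_i w_{ik}^{(t)}\mathbf{x}_i$ with $w_{ik}^{(t)}=p_{ik}^{(t)}/n_k$, exactly as stated.

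The covariance update is where the penalty enters, and it is the main (though modest) obstacle: one has to recognize that the KL term recombines with the Gaussian log-determinant and trace terms into the same functional form. Using $-\log|\boldsymbol{\Sigma}_k^{-1}\mathbf{T}_k|=\log|\boldsymbol{\Sigma}_k|-\log|\mathbf{T}_k|$, the $\boldsymbol{\Sigma}_k$-dependent part of $Q$ equals, up to an additive constant, $-\tfrac{n_k+\eta_k}{2}\log|\boldsymbol{\Sigma}_k|-\tfrac12\mathrm{tr}\!\big(\boldsymbol{\Sigma}_k^{-1}(\mathbf{S}_k+\eta_k\mathbf{T}_k)\big)$. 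Reparametrizing by the precision matrix $\boldsymbol{\Omega}_k=\boldsymbol{\Sigma}_k^{-1}$ turns this into $\tfrac{n_k+\eta_k}{2}\log|\boldsymbol{\Omega}_k|-\tfrac12\mathrm{tr}\!\big(\boldsymbol{\Omega}_k(\mathbf{S}_k+\eta_k\mathbf{T}_k)\big)$, which (concavity of $\log\det$ on the SPD cone plus a linear term) is strictly concave as soon as $n_k+\eta_k>0$; hence the unique maximizer solves the first-order condition $\tfrac{n_k+\eta_k}{2}\boldsymbol{\Omega}_k^{-1}=\tfrac12(\mathbf{S}_k+\eta_k\mathbf{T}_k)$, i.e. $\boldsymbol{\hat\Sigma}_k^{(t+1)}=(\mathbf{S}_k+\eta_k\mathbf{T}_k)/(n_k+\eta_k)$. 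Substituting $n_k=n\pi_k^{(t+1)}$, $\mathbf{S}_k=n_k\sum_i w_{ik}^{(t)}(\mathbf{x}_i-\boldsymbol{\hat\mu}_k^{(t)})(\mathbf{x}_i-\boldsymbol{\hat\mu}_k^{(t)})^\top$, and $\beta_k^{(t+1)}=n_k/(n_k+\eta_k)=n\pi_k^{(t+1)}/(\eta_k+n\pi_k^{(t+1)})$ gives precisely the stated convex combination. The one point requiring care is confirming the stationary point is a genuine global maximum in the SPD cone; here $\eta_k>0$ together with $\mathbf{T}_k\succ0$ is exactly what guarantees $\mathbf{S}_k+\eta_k\mathbf{T}_k\succ0$ (hence $\boldsymbol{\hat\Sigma}_k^{(t+1)}\succ0$) even in LSS regimes where $\mathbf{S}_k$ is singular, and the $\eta_k=0$ case reduces to the classical EM-GMM update.
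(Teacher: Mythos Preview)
Your proof is correct and follows essentially the same route as the paper: write out $Q(\boldsymbol{\theta}\mid\boldsymbol{\theta}^{(t)})$, observe that the penalty does not touch the $\pi_k$ and $\boldsymbol{\mu}_k$ blocks so those updates are the standard EM ones, and then differentiate the $\boldsymbol{\Sigma}_k$-part with respect to the precision to obtain the linear estimating equation $(\eta_k+n_k)\boldsymbol{\Sigma}_k=\mathbf{S}_k+\eta_k\mathbf{T}_k$. Your only addition beyond the paper's argument is the explicit strict-concavity check in $\boldsymbol{\Omega}_k$ guaranteeing that the stationary point is the unique global maximizer in the SPD cone, which the paper leaves implicit.
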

\begin{proof}
We wish to maximize 
\begin{align*}
 Q( \boldsymbol{\theta} \vert \boldsymbol{\theta}^{(t)}) &= \sum_{i=1}^n \sum_{k=1}^K p_{ik}^{(t)} ( \log f(\mathbf{x}_i \in \mathcal{C}_k | \boldsymbol{\theta}) + \log \pi_k ) \\
&- \frac{1}{2} \sum_{k=1}^K \eta_k \left( \mathrm{tr}(\mathbf{\Sigma}_k^{-1}\mathbf{T}_k) - \log |\mathbf{\Sigma}_k^{-1}\mathbf{T}_k| - m\right) 
\end{align*}
Same steps are used as in the usual EM algorithm to derive $\hat \pi_k^{(t+1)}$ and $\hat{\boldsymbol{\mu}}_k^{(t+1)}$. Solving for $\boldsymbol{\hat{\Sigma}}_k^{(t+1)}$, we can compute the solution of $ \nabla_{\boldsymbol{\Sigma}_k^{-1}}  Q(\boldsymbol{\theta} \vert \boldsymbol {\theta}^{(t)}) = \mathbf{0}$ while keeping the other parameters of $\boldsymbol{\theta}$ fixed with their solutions for  iteration $t$. 
By setting $\tilde{\mathbf{x}}_i= \mathbf{x}_i-\boldsymbol{\hat{\mu}}_k^{(t)}$, this leads to solving the following estimating equation:
\begin{align*}
&\sum_{i=1}^n p_{ik}^{(t)} \left(\boldsymbol{\Sigma}_k   - \tilde{\mathbf{x}}_i\tilde{\mathbf{x}}_i^\top\right)   -  \eta_k \left( \boldsymbol{\Sigma}_k - \mathbf{T}_k \right)  = \mathbf{0} \\
\Leftrightarrow  &\sum_{i=1}^n p_{ik}^{(t)} \tilde{\mathbf{x}}_i\tilde{\mathbf{x}}_i^\top + \eta_k \mathbf{T}_k  = \Big( \eta_k + \sum_{i=1}^n p_{ik}^{(t)} \Big) \mathbf{\Sigma}_k
\end{align*}
whose solution is $\hat{\boldsymbol{\Sigma}}_k^{(t+1)}$ given in the proposition. 
\end{proof}

\begin{algorithm}[!t]
\caption{$L$-fold cross-validation of $\eta_k$ for $k$th cluster} \label{alg:CV2}
\textbf{Input:}  Initial set of indices $\mathcal{D}^0$ and scale $\hat \theta_k^0$ for the $k$th cluster. A set $\{ \eta_j \}_{j=1}^J$ of candidate penalty parameter values. 
\begin{algorithmic}[1]
\State Split $\mathcal D^0$ into $L$ distinct folds $\mathcal{D}_1,\ldots,\mathcal{D}_L$ s.t. $\mathcal D^0 = \cup_{l=1}^L \mathcal D_l$ and set $\mathbf{T}_k^0 = \hat \theta_k^0 \cdot \mathbf{I}_m$ as the target matrix
\State Set $\mathrm{Err}_j \equiv \mathrm{Err}(\eta_j)=0$ for $j=1,\ldots,J$.

    \For{$l \in [\![1,L]\!]$}
        \State Set $ \mathcal D_{\textup{val}} = \mathcal{D}_l$ and $\mathcal D_{\textup{tr}} = \mathcal{D} /\ \mathcal{D}_{\textup{val}}$
         \State $\mathbf{S}_{\textup{val}} = \frac 1{|\mathcal{D}_{\textup{val}} |} \sum_{i \in \mathcal D_{\textup{val}} }  (\mathbf{x}_i-\bar{\mathbf{x}}_{\textup{val}})(\mathbf{x}_i- \bar{\mathbf{x}}_{\textup{val}})^\top$
        \State $\boldsymbol{\hat{\Sigma}}= \frac 1{|\mathcal{D}_{\textup{tr}} |} \sum_{i \in  \mathcal{D}_{\textup{tr}}} (\mathbf{x}_i-\bar{\mathbf{x}}_{\textup{tr}})(\mathbf{x}_i-\bar{\mathbf{x}}_{\textup{tr}})^\top$
       \For{$\eta \in \{\eta_1,\ldots,\eta_J\}$}
            \State $\hat{\boldsymbol{\Sigma}}_{\eta}= \frac{|\mathcal D_{\textup{tr}}|}{\eta+|\mathcal D_{\textup{tr}}|} \boldsymbol{\hat{\Sigma}} + \frac{\eta}{\eta+|\mathcal D_{\textup{tr}}|} \mathbf{T}_k^0$
            \State $\mathrm{Err}_{l} = \mathrm{Err}_{l} + \mathrm{tr}(\hat{\boldsymbol{\Sigma}}_{\eta}^{-1}\mathbf{S}_{\textup{val}}) + \log | \boldsymbol{\Sigma}_{\eta} | $
        \EndFor
 \EndFor
\State Choose $\eta_j$ that minimizes $ \{\mathrm{Err}(\eta_j)\}_{j=1}^J$ 

\end{algorithmic}
\end{algorithm}


Let us now turn the discussion to user-defined  regularization parameters, $\mathbf{T}_k$-s and $\eta_k$-s. 
The fixed SPD target matrices $\mathbf{T}_k$ can  bring prior knowledge to the estimation problem or simply provide a well-conditioned estimator of cluster covariance matrices by shrinking the covariance matrix updates towards a scaled identity matrix. In the latter case, the obvious target matrices to be used are 
$\mathbf{T}_k = \hat \theta_k^0  \cdot \mathbf{I}_m$
where $ \hat \theta_k^0$ is initial estimate of the scale statistic $\theta_k = \mathrm{tr}(\boldsymbol{\Sigma}_k)/m$. 
Here we use $\hat \theta_k^0 = \mathrm{tr}(\hat{\boldsymbol{\Sigma}}_k^0)/m$, where $\hat{\boldsymbol{\Sigma}}_k^0$ is an initial estimate of the $k$th cluster covariance matrix obtained \textit{e.g.}, with a first clustering obtained by K-means or after several iterations of the EM.

The choice of regularization parameter is also important. We choose the regularization parameter using a cross-validation procedure minimizing the negative Gaussian log-likelihood as in \cite{yi2020shrinking}. Each $\eta_k$ is estimated independently among a set of candidates values $\{\eta_1,\ldots,\eta_J\}$. The procedure is described in Algorithm~\ref{alg:CV2}.

\section{Experiments on synthetic data}
\label{sec:3}

\subsection{Simulation set-up}

The performance of the proposed method is compared with K-means and vanilla Gaussian-EM (G-EM) algorithm. In order to make G-EM competitive in regimes with high dimensions and low sample size, we add a vanilla regularization to G-EM for the estimation of the covariance matrix. K-means is implemented using Scikit-learn library \cite{scikit-learn} while the proposed regularized Gaussian EM (RG-EM) algorithm is implemented from scratch. For both the G-EM and the proposed RG-EM, we grant a maximum of 200 iterations for parameter estimation. For our method, we recompute the optimal $\eta_k$ every 20 iterations.
\begin{figure}[!t]
\centering
\subfigure[$n=3000$\label{fig:1a}]{\includegraphics[width=4.2cm]{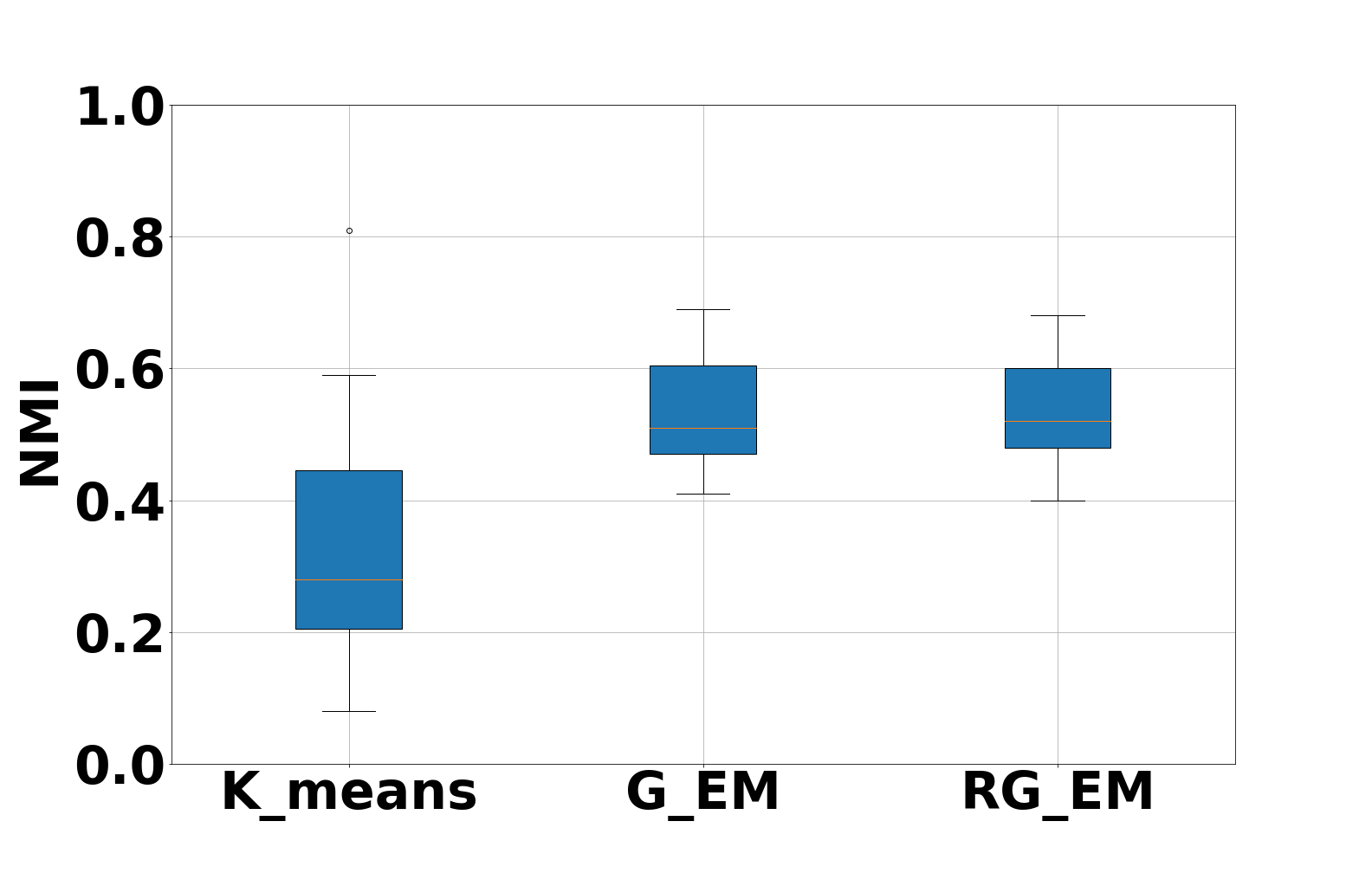}}
\subfigure[$n=1000$\label{fig:1a}]{\includegraphics[width=4.2cm]{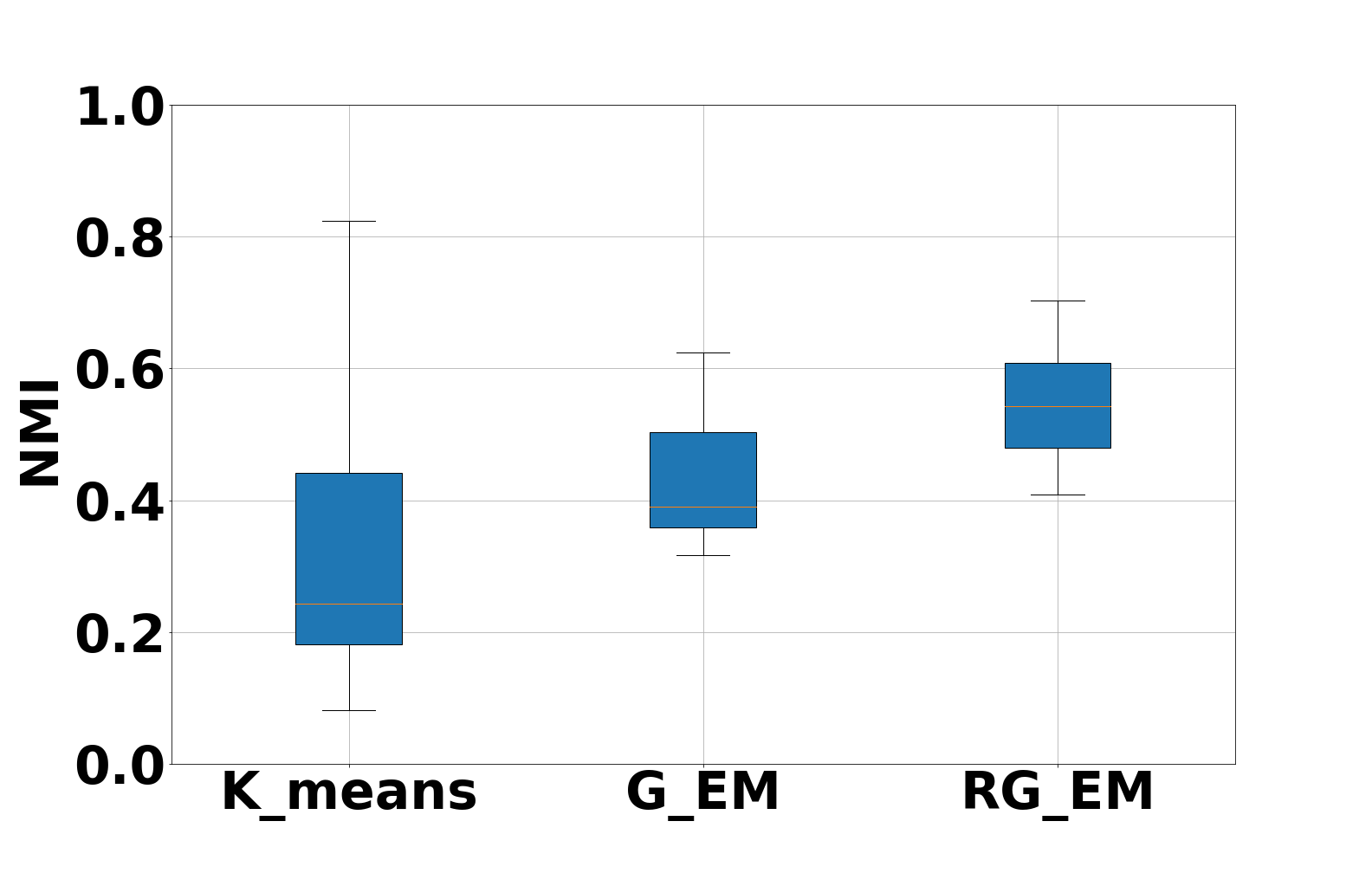}}
\subfigure[$n=600$\label{fig:1a}]{\includegraphics[width=4.2cm]{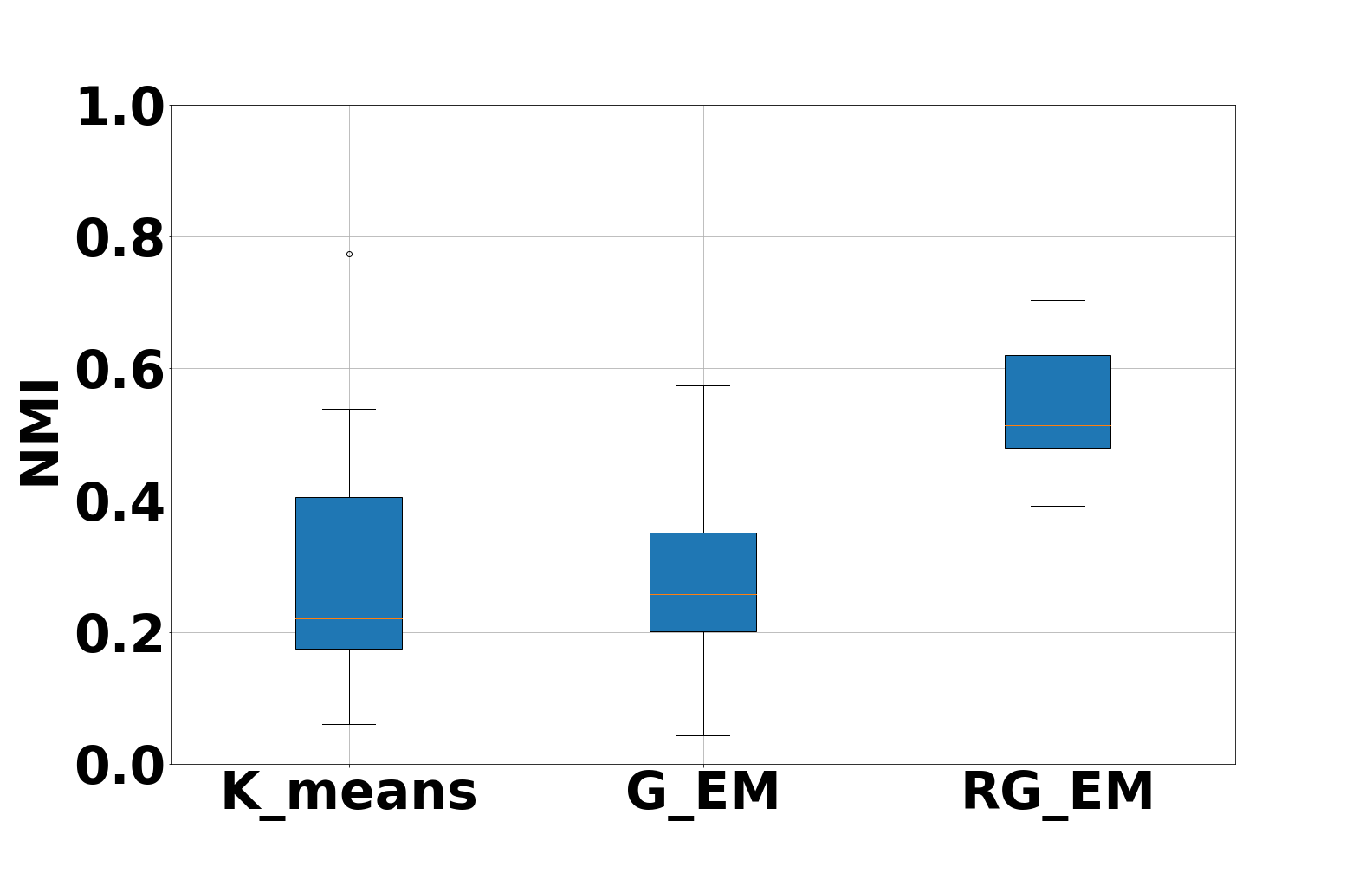}}
\subfigure[$n=400$\label{fig:1a}]{\includegraphics[width=4.2cm]{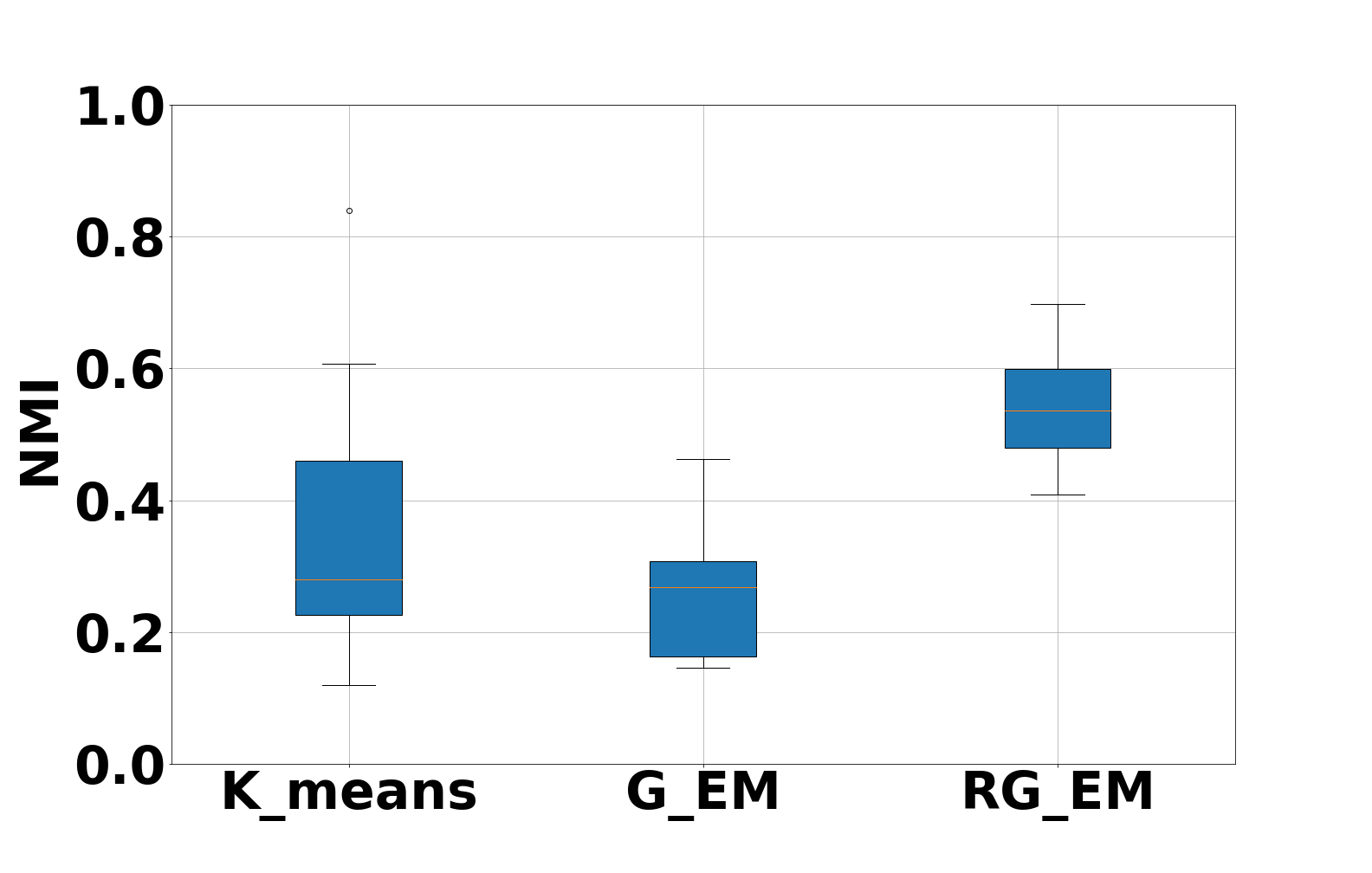}}
\caption{Performances in terms of NMI for different sample lengths $n$. The dimension is $m=50$.}
\label{fig:1} 
\end{figure}


The number of clusters is $K=3$, and data is generated from GMM with equal prior probabilities ($\pi_k=1/3$, $\forall k$).  The mean vectors of each cluster are drawn on a centered sphere of radius $2$. Cluster covariance matrices possess autoregressive AR(1) structure $(\boldsymbol{\Sigma}_k)_{ij}= \theta_k \varrho_k^{|i-j|}$, where each  cluster having its own AR correlation coefficient $\varrho_k$. The total number of points and the dimension vary across the simulations.

\subsection{Performance in various settings}

We test the performances of each method on various scenarios differing by the number of points $n$ and dimension $m$. For each scenario, we run 25 simulations with different covariance matrices and mean vectors.

Results are displayed in Fig.~1, and performance is evaluated using NMI index \cite{McDaid2011Normalized}. Computed using both the mutual information and the entropy of each cluster, it is a score between 0 and 1 that reflects how similar the set found are compared to the original clusters. In the first scenario, a lot of samples are available for each cluster. K-means performs poorly compared to EM-based methods, probably because clusters sometimes overlap. G-EM and proposed RG-EM perform equally well since there are enough data to accurately estimate the covariance matrices (regularization is not necessary). Then we gradually reduce $n$ in the train set. As can be noted, the performance of K-means is not much affected, while the performance of G-EM drops heavily since there is not enough data for obtaining well-conditioned covariance matrix updates, although the vanilla regularization ensures invertibility. On the contrary, RG-EM handles very well the reduction of the train set size and keeps almost equal performances, using the target matrix to compensate for the lack of data. 

Fig.2 displays the Frobenius error for estimation of the covariance matrix of cluster 1 across the iterations for $n=600$ and $m=50$.
As expected, the Robust Gaussian EM method has a much smaller error compared to the vanilla EM method. It converges much faster with a smaller variance toward its final estimation. On the contrary, vanilla EM estimation converges toward a worse estimation of the covariance matrix, as the error is higher than the first guess with K-means.
\begin{figure}[!t]
\centering
\subfigure{\includegraphics[width=7.5cm,trim=10 1 10 100, clip]{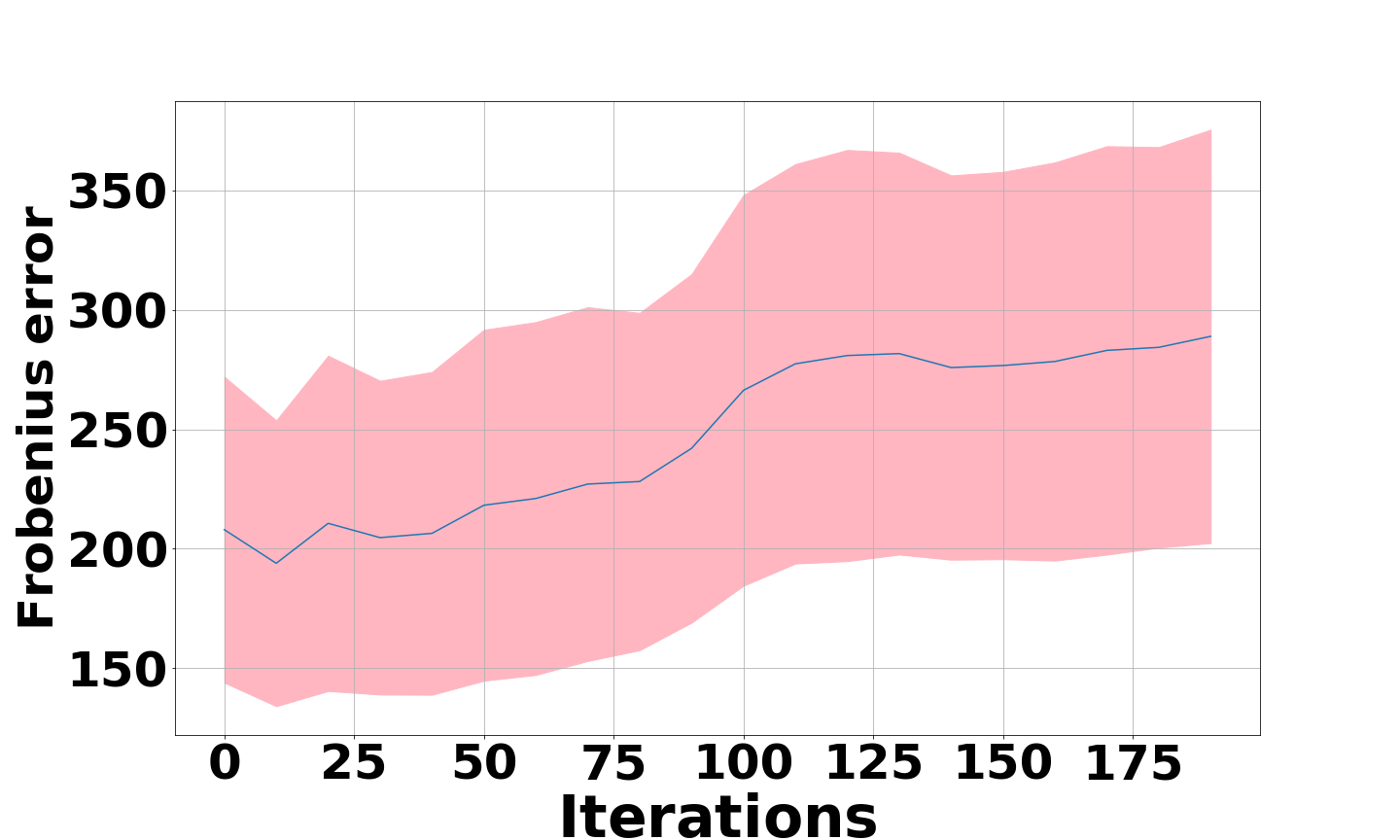}}
\subfigure{\includegraphics[width=7.5cm,trim=10 1 10 100, clip]{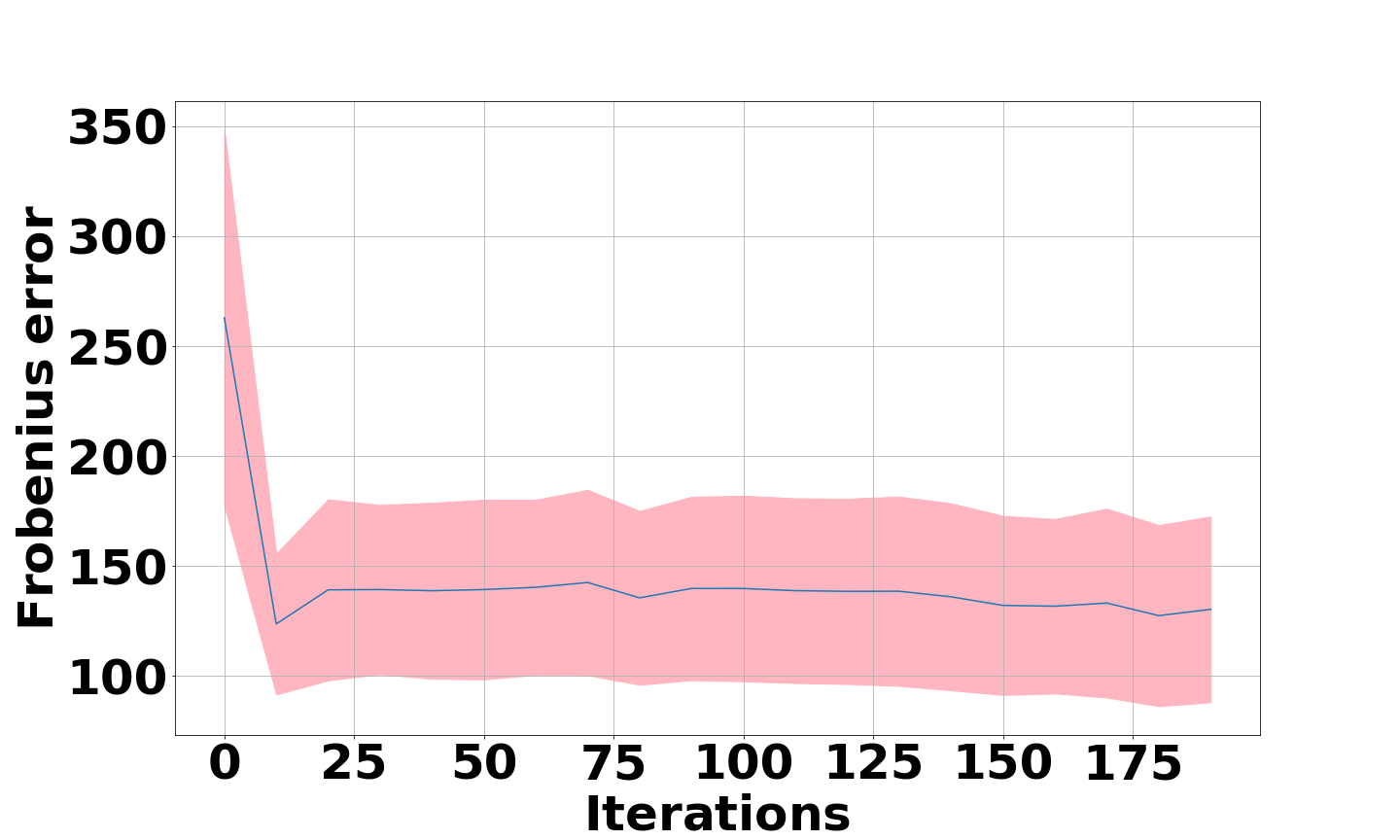}}
\caption{Frobenius error for G-EM ({\it top panel}) and proposed RG-EM ({\it bottom panel}) for cluster 1; $n=600$, $m=50$.}
\label{fig:2}
\end{figure}


We conclude our experiments by studying the evolution of NMI of each method when the dimension $m$ increases for fixed values of $n=600$ and $n=1000$. As can be noted from Fig.~3, while RG-EM can cope well with the high-dimensionality of the data, with performances almost as good as in low dimensions, there is a breakpoint dimension for G-EM where its NMI drops. Such a threshold value depends on the number of data points: the larger the train set is, the higher the breakpoint is.

\begin{figure}[!t]
\centering
\subfigure{\includegraphics[width=7.5cm,trim=10 1 10 110, clip]{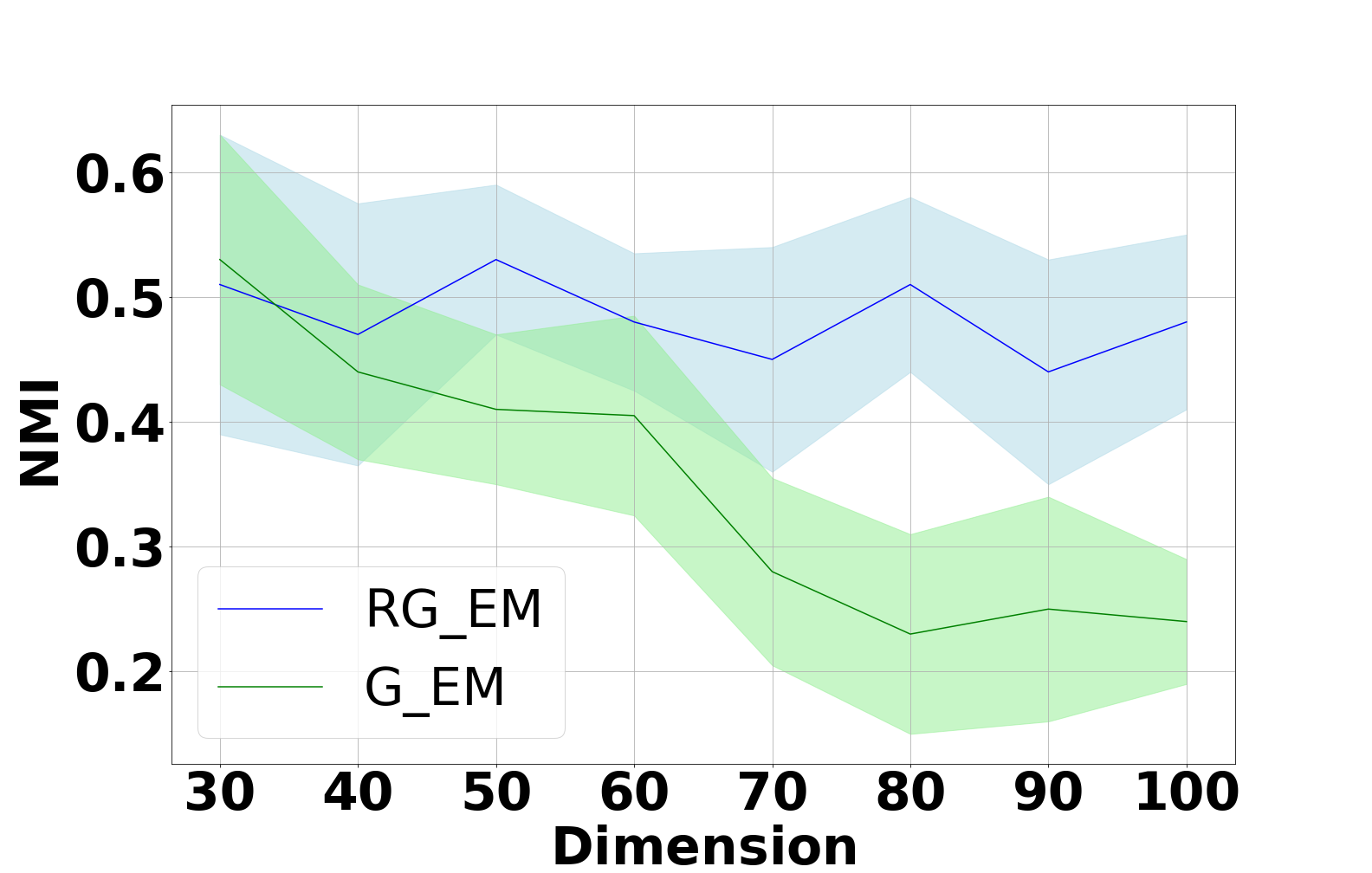}}
\subfigure{\includegraphics[width=7.5cm,trim=10 1 10 110, clip]{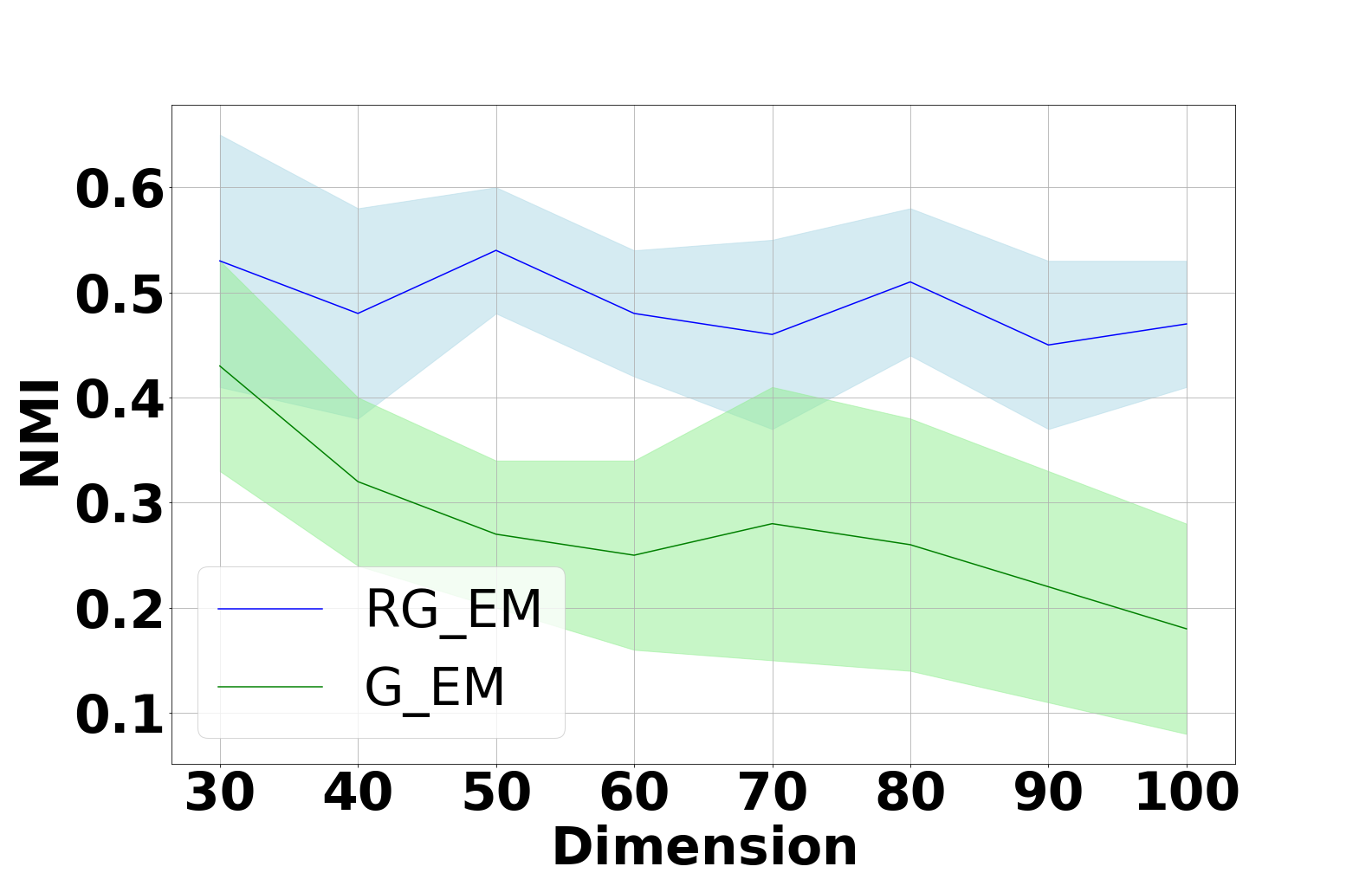}}
\caption{Performance evaluation via NMI index for different dimensions $m$ when $n=1000$ ({\it top panel}) and $n=600$ ({\it bottom panel}).}
\label{fig:3} 
\end{figure}

\section{Conclusion}
\label{sec:conclusions}

In this paper, we presented a regularized version of the EM algorithm for GMM that outperforms several state-of-the-art methods in low sample size regimes on simulated datasets. In this new approach, the estimation of the covariance matrix is regularized with an additive penalty that shrinks the covariance matrix towards a preset target matrix. If the target matrix is close enough to the actual covariance matrix, we achieve very good performances even with few data in high dimensions. The optimal coefficients $\eta_k$ controlling the regularization are chosen following a cross-validation procedure and updated regularly across the iterations. Such cross-validation procedure avoids regularization if enough data are available, retrieving the performance of the classical EM algorithm. Our method can thus be seen as an enhancement of regular EM when the data aspect ratio $\frac{n}{m}$ is low. 

\bibliographystyle{IEEEtran}
\bibliography{IEEEabrv,IEEEbib}
\end{document}